% This is samplepaper.tex, a sample chapter demonstrating the
% LLNCS macro package for Springer Computer Science proceedings;
% Version 2.21 of 2022/01/12
%
\documentclass[runningheads]{llncs}

\usepackage{amsthm}
\usepackage[T1]{fontenc}
% T1 fonts will be used to generate the final print and online PDFs,
% so please use T1 fonts in your manuscript whenever possible.
% Other font encondings may result in incorrect characters.
%
\usepackage{graphicx}
% Used for displaying a sample figure. If possible, figure files should
% be included in EPS format.
%
% If you use the hyperref package, please uncomment the following two lines
% to display URLs in blue roman font according to Springer's eBook style:
%\usepackage{color}
%\renewcommand\UrlFont{\color{blue}\rmfamily}

%%%% ADDED PACKAGES AND MACROS
% Recommended, but optional, packages for figures and better typesetting:
\usepackage{microtype}

\usepackage{caption}
\usepackage{subcaption}

\usepackage{booktabs} % for professional tables
\usepackage{multirow}
\usepackage{makecell}
\usepackage{placeins}

%\usepackage{biblatex} %Imports biblatex package
% \addbibresource{sample.bib} %Import the bibliography file
\usepackage{hyperref}

% For theorems and such
\usepackage{amsmath}
\usepackage{amssymb}
\usepackage{mathtools}
\usepackage{bm}
\usepackage{paralist}
% if you use cleveref..
\usepackage[capitalize,noabbrev]{cleveref}

\theoremstyle{plain}

\def\R{\mathbb{R}}

\DeclareMathOperator*{\argmax}{arg\,max}

\def\softmax{\mathrm{softmax}}
\def\our{\operatorname{t-softmax}}

\def\ourr{\operatorname{r-softmax}}

%%%%% ENF OF ADDED 

%%%% AUTHORS

% \usepackage{authblk}

\author{Klaudia Bałazy \and
    Łukasz Struski \and
    Marek Śmieja \and
    Jacek Tabor}

\institute{Jagiellonian University \\
\texttt{Corresponding author: klaudia.balazy@doctoral.uj.edu.pl}}

%%%%

\begin{document}
%
% \title{Contribution Title\thanks{Supported by organization x.}}
\title{\boldmath$\operatorname{r-softmax}$: Generalized Softmax with Controllable Sparsity Rate}
%
%\titlerunning{Abbreviated paper title}
% If the paper title is too long for the running head, you can set
% an abbreviated paper title here
%

% \author{Klaudia Bałazy \\
%   \texttt{Jagiellonian University} \\
%   \texttt{klaudia.balazy@doctoral.uj.edu.pl}
%   \and
%   Łukasz Struski \\
%   % Jagiellonian University \\
%   % klaudia.balazy@doctoral.uj.edu.pl \\ \and
%   % Marek Śmieja \\
%   % Jagiellonian University \\
%   % klaudia.balazy@doctoral.uj.edu.pl \\ \and
%   % Jacek Tabor \\
%   % Jagiellonian University \\
%   % klaudia.balazy@doctoral.uj.edu.pl \\
%   }

% \author{Klaudia Bałazy \and
% Łukasz Struski \and
% Marek Śmieja \and Jacek Tabor}
%
% \authorrunning{F. Author et al.}
% First names are abbreviated in the running head.
% If there are more than two authors, 'et al.' is used.
%
% \institute{Jagiellonian University}
% \institute{Princeton University, Princeton NJ 08544, USA \and
% Springer Heidelberg, Tiergartenstr. 17, 69121 Heidelberg, Germany
% \email{lncs@springer.com}\\
% \url{http://www.springer.com/gp/computer-science/lncs} \and
% ABC Institute, Rupert-Karls-University Heidelberg, Heidelberg, Germany\\
% \email{\{abc,lncs\}@uni-heidelberg.de}}
%
\maketitle              % typeset the header of the contribution

\begin{abstract}
Nowadays artificial neural network models achieve remarkable results in many disciplines. Functions mapping the representation provided by the model to the probability distribution are the inseparable aspect of deep learning solutions. Although softmax is a commonly accepted probability mapping function in the machine learning community, it cannot return sparse outputs and always spreads the positive probability to all positions. In this paper, we propose r-softmax, a modification of the softmax, outputting sparse probability distribution with controllable sparsity rate. In contrast to the existing sparse probability mapping functions, we provide an intuitive mechanism for controlling the output sparsity level. We show on several multi-label datasets that r-softmax outperforms other sparse alternatives to softmax and is highly competitive with the original softmax. We also apply r-softmax to the self-attention module of a pre-trained transformer language model and demonstrate that it leads to improved performance when fine-tuning the model on different natural language processing tasks.
\keywords{Sparse probability function  \and  Controlling sparsity level \and Softmax alternative.}
\end{abstract}

% \begin{abstract}
% The abstract should briefly summarize the contents of the paper in
% 150--250 words.

% \keywords{First keyword  \and Second keyword \and Another keyword.}
% \end{abstract}

\section{Introduction}
\label{intro}

Deep learning models achieve state-of-the-art results in various domains such as computer vision, natural language processing (NLP), chemical sciences, and many others. Transforming the numerical output, returned by a neural network into a probability distribution on a discrete set is an integral aspect of many machine learning models. In classification, it describes the probability over classes; in the attention mechanism for NLP, it indicates which words in a text are contextually relevant to other words. The generally accepted standard for probability mapping function is a softmax function~\cite{bridle1990probabilistic,luce2012individual}. Softmax is easy to evaluate and differentiate as well as it can be transformed into convex a loss function, which is especially appealing in classification problems. 

Although softmax is the most widely applied probability mapping function in machine learning, it cannot return sparse outputs. In other words, softmax assigns a non-zero probability to every component. The representation that allows for zero probabilities would be more natural and more interpretable as certain elements could be clearly marked as insignificant. Since softmax always spreads the positive probability to all positions, it does not return the number of relevant labels, i.e. those with non-zero probabilities. In consequence, applying softmax function in multi-label classification involves defining a threshold below which the label is considered negative, which requires the hyperparameter selection process that generates additional computational overhead. 

\begin{figure}[tb!]
\vskip 0.2in
\begin{center}    \includegraphics[scale=0.34]{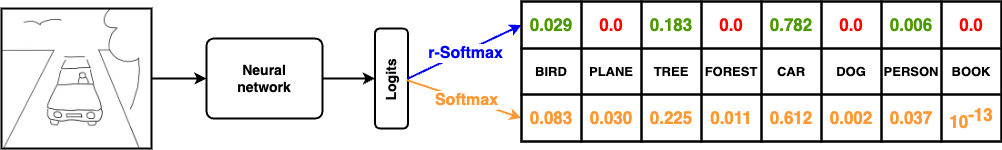}
\end{center}
\caption{The difference between using softmax and $\ourr$ for multi-label classification. Both functions return the probability distribution over the specified classes based on the output provided by the neural network model. Since $\ourr$ is able to produce zero probabilities, we can consider them as an indication of a negative class. For softmax, we need to select an appropriate threshold below which a class will be classified as negative. Thus, the representation provided by $\ourr$ is more intuitive and more interpretable. \label{fig:intro}}
\vskip -0.2in
\end{figure}

In this paper, we introduce $\ourr$, a sparse alternative to softmax function, that eliminates the problem of non-zero probabilities and allows for the intuitive control of the sparsity rate. The sparsity rate~$r$, representing the fraction of desired zero values, can be specified by the user, as well as the model can be trained to select its appropriate value using a typical gradient descent procedure. In consequence, applying $\ourr$ in multi-label classification and training a model to predict appropriate~$r$, eliminates the need for defining an additional mechanism, e.g. a threshold, for deducing the number of positive labels, see \Cref{fig:intro}. 

We evaluate $\ourr$ as a function determining probabilities of classes in a multi-label classification problem and as a function determining the significance probability of elements in the attention mechanism. In the multi-label classification scenario, $\ourr$ is benchmarked on various synthetic and real datasets. Our experiments demonstrate that the performance of $\ourr$ is significantly better than other sparse alternatives to softmax, like sparsemax~\cite{sparsemax} and sparsehourglass~\cite{laha2018controllable}, and is competitive with the original softmax with a selected optimal threshold determining if the label is positive. In the case of the attention mechanism, we replace softmax mapping with $\ourr$ in the pre-trained transformer language model. We show that our modification can improve the performance of the fine-tuned model on various NLP tasks. 

Our contribution can be summarized as follows:
\begin{itemize}
    \item We introduce $\ourr$, a sparse probability mapping function that is a generalization of the original softmax. The desired sparsity rate $r$ can be defined by the user or learned by the model itself.
    \item We provide an extensive evaluation of $\ourr$ on the multi-label classification problem that demonstrates the benefits of using our method.
    \item We show that replacing softmax with $\ourr$ in the pretrained transformer language model improves the performance of a fine-tuned model on most of the considered NLP tasks.
\end{itemize}

\section{Related Work}
\label{related}

Functions mapping the output of an artificial neural network into the probability distribution are indispensable components in machine learning. They are useful, for example, in determining class membership in a classification problem or in assessing the significance of the elements under consideration. 

\subsubsection{Softmax} Softmax is a commonly used function in machine learning, which parametrizes a probability distribution over a discrete set of outputs~\cite{bridle1990probabilistic,luce2012individual}. Its application ranges from classification through attention mechanism~\cite{vaswani2017attention} to reinforcement learning~\cite{sutton2018reinforcement}. However, softmax cannot return sparse outputs with zero values at certain positions. In consequence, in multi-label classification, we need to find a threshold under which the label is considered negative.

However, classification models with softmax frequently return overconfident predictions, which exceed model accuracy resulting in uncalibrated models~\cite{guo2017calibration}. Moreover, softmax rarely spreads similar probability to a few positions, which is in particular inconvenient in multi-label classification, where more than one label per example may be correct. Another disadvantage is caused by the fact that softmax cannot return sparse outputs with zero values at certain positions. In consequence, in multi-label classification, we need to find a threshold under which the label is considered negative. Moreover, non-sparse outputs generate computational overhead in the case of high-dimensional outputs.

\subsubsection{Alternatives to softmax} Given the broad range of applications for probability mapping functions in machine learning, various alternatives to softmax have been developed, each with its own set of benefits and drawbacks depending on the particular use case. Noteworthy alternatives to softmax include the spherical softmax~\cite{de2015exploration}, multinomial probit~\cite{albert1993bayesian}, softmax approximations~\cite{bouchard2008efficient} or Gumbel-Softmax~\cite{jang2016categorical}, which provides a continuous probability distribution that serves as an approximation of the discrete distribution produced by softmax. As our paper introduces a novel sparse alternative to softmax, below we focus on existing sparse probability mapping functions. 

Sparsemax~\cite{sparsemax} is defined as a projection of the input vector onto the probability simplex. Since the projection is very likely to hit the boundary of the simplex, sparsemax returns sparse outputs. The authors also constructed a natural convex loss for sparsemax, making an analogy with a derivative of the cross-entropy loss applied to softmax. Although the derivation of the model is theoretically justified, its performance is usually inferior to softmax models.

In \cite{laha2018controllable}, the authors defined a general family of probability mapping functions, which includes many popular functions, such as softmax or sparsemax, as special cases. By adding a regularization term and component-wise transformation function to the sparsemax, they constructed a general formulation of probability mapping functions. They also proposed a general strategy of designing convex loss functions for their models, including an alternative loss for sparsemax, which increased its experimental performance. A theoretical contribution of the paper is further enriched by the formulating desirable properties for probability mapping functions.

% By adding a regularization term and component-wise transformation to sparsemax, the work~\cite{laha2018controllable} constructed a general formulation of probability mapping functions that includes many popular functions, such as softmax or sparsemax, as special cases. They also proposed a general strategy of designing convex loss functions for their models, including an alternative loss for sparsemax, which increased its experimental performance. A theoretical contribution of the paper was further enriched by formulating desirable properties for probability mapping functions.

\section{Sparse version of softmax}
\label{sec.theorem}

In this section, we introduce $\ourr$, a sparse probability mapping function with a controllable sparsity rate. First, we describe the motivation behind the use of the sparse mapping function. Next, we define the weighted softmax -- a generalization of the classical softmax~\cite{bridle1990probabilistic}. Finally, we introduce $\ourr$, where the sparsity rate can be easily defined by the user.

\subsubsection{Problem motivation} Probability mapping function is a key component in typical deep learning applications. It allows for transforming a real-valued response ${x = (x_1,\ldots,x_n) \in \R^n}$ of the neural network to the probability vector $p = (p_1,\ldots,p_n)$, where $p_i \geq 0$ and $\sum_{i=1}^n p_i = 1$. To parameterize this probability, we usually use the softmax function:
\[
\softmax{(x)} = \Big(\tfrac{\exp(x_1)}{\sum \limits^n_{i=1}\exp(x_i)}, \dots, \tfrac{\exp(x_n)}{\sum \limits^n_{i=1} \exp(x_i)}\Big).
\]
Since softmax is in fact the normalized exponential function, it can be evaluated and differentiated efficiently, which makes it very appealing in training deep learning models. To discuss a specific softmax application, let us consider a classification problem. In this case, the component $p_i$ describes the probability that the input example comes from the $i$-th class. If we know that every example has a single class label, then we return a class with maximal probability:
$$
\mathrm{class}(x) = \arg \max_i p_i.
$$
If more than one class can be correct for a given example (multi-label classification), we return $k$ classes with the highest probabilities. There appears a natural question of {\em how to select the number of classes $k$ for a given input?} Since the softmax function does not return zero probabilities, we cannot easily say what probability should be converted to a positive label and which should not. In consequence, we arrive at a problem of manually introducing a threshold below which the class label will be considered negative.

The above example illustrates the basic problem with softmax that it cannot return sparse outputs. If the probability mapping function would be able to zero out probabilities, then we could interpret zero probabilities as negative labels and the remaining ones as positive labels. This requirement is also important for other machine learning problems. The main building block of recent transformer architecture~\cite{vaswani2017attention} is a self-attention layer, which is responsible for selecting key information from a given representation. By applying softmax, we force the model to consider all components as relevant, which usually is not the case. The attention module should be able to ignore unnecessary information by assigning zero probability to selected components.
 
\subsubsection{The weighted softmax} Keeping the above motivation in mind, we focus on constructing an alternative to softmax mapping, which is capable of returning sparse output vectors. We first define the weighted softmax~--~a general form of the probability mapping function. By a proper parameterization of its weights, the weighted softmax can reduce to a typical softmax, or binary one-hot vector, in which the coordinate containing maximal probability is rounded to 1 and the remaining coordinates are clipped to 0. It can also parametrize sparse probability mapping functions, which lay between softmax and one-hot vectors.

Let $x=(x_1,\ldots,x_n) \in \R^n$ be a point, associated with vector of weights $w=(w_1,\ldots,w_n) \in \R_+^n$, where $\sum^n_{i=1}w_i > 0$. We define a weighted softmax by the following formula:
\[
\softmax{(x, w)} = \Big(\tfrac{w_1\exp(x_1)}{\sum \limits^n_{i=1}w_i\exp(x_i)}, \dots, \tfrac{w_n\exp(x_n)}{\sum \limits^n_{i=1}w_i\exp(x_i)}\Big).
\]
All components of the weighted softmax are non-negative and sum to~$1$, which means that it is a proper parametrization of a discrete probability distribution. For a constant weight vector $w$, the weighted softmax reduces to classical softmax. A crucial difference between softmax and weighted softmax is that the weighted softmax is able to return zeros at some coordinates. To zero out the $i$-th coordinate it is enough to set $w_i=0$. In the extreme case, the weighted softmax can produce one-hot vectors by setting exactly one non-zero weight.

We are interested in such a parametrization of weights in the weighted softmax, which allows for a smooth transition between softmax and binary one-hot vectors. For this purpose, we construct $\our$, in which all weights depends on a single parameter~$t>0$:
\begin{equation}\label{eq.t-softmax}
\our(x, t) = \softmax(x, w_t),
\end{equation}
where $w_t = (w^1_t, \ldots, w^n_t)$ and $w^i_t = \mathrm{ReLU}(x_i + t - \max(x))$.
Clearly, all weights~$w_i$ are nonnegative and there is at least one positive weight, which is consistent with the definition of weighted softmax. We can observe that the $i$-th weight is zero if the absolute difference between $x_i$ and the maximum value $\max(x)$ is greater than or equal to $t$.

% The following theoretical result discusses how the form of $\our$ changes when we modify the value of $t$:
% % We may observe that $\our$ indeed reduces to softmax or one-hot vector preserving only dominant probability in the limiting values of parameter $t$.
% \begin{theorem}
% Let $x\in\R^n$ be a data point and let $t\in(0, \infty)$. Then
% \begin{itemize}
%     \item the limit of $\our(x, t)$ is $\softmax(x)$ as $t$ approaches infinity,
%     \item if $x$ reaches unique max at index $k$, then
%     \begin{equation} \label{eq:1}
%         \our(x, t)=\mathrm{onehot}(\argmax_i(x)),
%     \end{equation}
%     for $t\in(0, x_k -\max_{i\neq k}(x)]$, where $\mathrm{onehot}(i)\in\R^n$ is a vector consisting of zeros everywhere except $k$-th position where $1$ is located.
% \end{itemize}
% \end{theorem}

% \begin{proof}
% The proof of the first property follows from the fact that $\our(x, t) = \softmax(x, \tfrac{w_t}{t})$, and if $t$ approaches infinity then $\tfrac{w}{t}$ goes to $1$, so we obtain $\softmax(x, 1) = \softmax(x)$. The last property is a direct consequence of the definition of $\our$.
% \end{proof}

The following examines how $\our$ changes with varying values of $t$:
\begin{theorem}
Let $x\in\R^n$ be a data point and let $t\in(0, \infty)$. Then
\begin{itemize}
    \item the limit of $\our(x, t)$ is $\softmax(x)$ as $t$ approaches infinity,
    \item if $x$ reaches unique max at index $k$, then
    \begin{equation} \label{eq:1}
        \our(x, t)=\mathrm{onehot}(\argmax_i(x)),
    \end{equation}
    for $t\in(0, x_k -\max_{i\neq k}(x)]$, where $\mathrm{onehot}(i)\in\R^n$ is a vector consisting of zeros everywhere except $k$-th position where $1$ is located.
\end{itemize}
\end{theorem}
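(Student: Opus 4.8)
The plan is to treat the two assertions independently, since each follows by unwinding the definition $w_t^i = \mathrm{ReLU}(x_i + t - \max(x))$ and substituting it into the weighted softmax used in~\eqref{eq.t-softmax}. In both cases the controlling observation is that the sign of the argument $x_i + t - \max(x)$ determines whether the $i$-th weight vanishes, so the whole argument reduces to identifying, for a given $t$, which coordinates carry positive weight.

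For the first bullet, I would first note that $x_i + t - \max(x) \geq t - (\max(x) - \min(x))$, so once $t > \max(x) - \min(x)$ every argument of the ReLU is strictly positive and the nonlinearity can be dropped, giving $w_t^i = x_i + t - \max(x)$ for all $i$. Writing $c = t - \max(x)$ and substituting into the weighted softmax yields
\[
\our(x,t)_i = \frac{(x_i + c)\exp(x_i)}{\sum_{j=1}^n (x_j + c)\exp(x_j)}.
\]
Dividing numerator and denominator by $c$ and letting $t \to \infty$ (so $c \to \infty$), each factor $x_i/c + 1 \to 1$; since there are only finitely many terms there is no uniformity concern, and the expression converges coordinatewise to $\exp(x_i)/\sum_j \exp(x_j) = \softmax(x)_i$.

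For the second bullet, let $k$ be the unique maximizer, so $\max(x) = x_k$ and $m := \max_{i\neq k}(x) < x_k$, and fix $t \in (0,\, x_k - m]$. I would split into two cases. For $i = k$ the argument equals $x_k + t - x_k = t > 0$, so $w_t^k = t$. For $i \neq k$ we have $x_i \leq m$, hence $x_i + t - x_k \leq m + t - x_k \leq 0$, where the last inequality uses precisely the upper bound $t \leq x_k - m$; thus $w_t^i = 0$. Substituting these weights into the weighted softmax, the denominator collapses to $t\exp(x_k) > 0$ and only the $k$-th numerator survives, so $\our(x,t) = \mathrm{onehot}(k)$, as claimed.

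Neither part presents a serious obstacle; the proof is essentially a sign analysis of the ReLU argument followed by direct substitution. The two points deserving care are (i) justifying in the first bullet that the ReLU may be removed for all sufficiently large $t$ \emph{before} passing to the limit, and (ii) in the second bullet, the inclusion of the right endpoint $t = x_k - m$, which is exactly the boundary case where the second-largest coordinate's weight first reaches zero — the half-open interval (closed at the top) is calibrated so that this endpoint still produces a one-hot output.
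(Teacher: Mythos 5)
Your proof is correct and follows essentially the same route as the paper's: the first bullet is the same normalize-the-weights-and-pass-to-the-limit argument (the paper divides $w_t$ by $t$ and uses $w_t/t\to 1$, you divide by $t-\max(x)$ after noting the ReLU can be dropped for large $t$), and the second bullet is the same sign analysis of the ReLU argument that the paper dismisses as immediate from the definition. Your write-up is in fact more careful than the paper's one-line proof, particularly in justifying the removal of the ReLU before taking the limit and in checking the boundary case $t=x_k-\max_{i\neq k}(x)$.
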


\begin{proof}
The first property is a consequence of $\our(x, t) = \softmax(x, \tfrac{w_t}{t})$, and if $t$ approaches infinity then $\tfrac{w_t}{t}$ goes to $1$, leading to $\softmax(x, 1) = \softmax(x)$. The last property follows directly from the definition of $\our$.
\end{proof}

In practice, we can treat $t$ as a model parameter, which will be tuned together with the remaining parameters in a training phase. This strategy is especially useful in a multi-label classification because we cannot decide a priori what is the correct number of positive labels for a given example. In this case, the model predicts both the number of positive labels as well as the distribution over classes. Experimental results show that this strategy gives promising results. 

\subsubsection{Controlling the number of non-zero values using r-softmax} 

Instead of learning the optimal value of $t$ as discussed above, there are situations in which we would like to have the ability to explicitly decide how many components returned by $\our$ should be zero. For this purpose, we introduce a parameter~$r\in[0, 1]$ that we call a \textit{sparsity rate}. Sparsity rate $r$ is an intuitive parameter that will represent the fraction of zero components we would like to obtain in the output probability distribution.

Recall that $w_i^t=0$ for $i=1,\ldots,n$ if $|x_i-\max(x)|\geq t$, as defined in~\Cref{eq.t-softmax}. To control the number of non-zero weights, we can inspect the range $[\min(x),\max(x)]$ and select $t$ such that $x_i < t < x_j$, where $x_i$ and $x_j$ are two distinct elements in ${x_1,\ldots,x_n}$, in increasing order. This will zero out the $i$-th component while keeping the $j$-th component non-zero. We can use the quantile of the set of $x$'s coordinates ${x_1,\ldots,x_n}$ to implement this rule. The $q$-quantile $\mathrm{quantile}(x, q)$ outputs the value $v$ in $[\min(x), \max(x)]$ such that the probability of ${x_i: x_i \leq v}$ equals $q$. If the quantile lies between $x_i$ and $x_j$ with indices $i$ and $j$ in the sorted order, we use linear interpolation to compute the result as $x_i + \alpha\cdot(x_j - x_i)$, where $\alpha$ is the fractional part of the computed quantile index. Setting $q=0$ or $q=1$ in $\mathrm{quantile}(x, q)$ will return the lowest or highest value of $x$, respectively.

% To explain our reasoning, let us recall that the $i$-th weight $w^t_i$ is zero if the distance between $x_i$ and the maximal component $\max_i(x)$ is greater than or equal to $t$, see \eqref{eq.t-softmax}. To have the control of the number of non-zero weights, we may inspect the range $[\min_i(x),\max_i(x)]$ and take $t$ satisfying $x_i < t < x_j$ to zero out $i$-th component but keep the $j$-th component non-zero. This rule can be realized using the quantile of the set of $x$'s coordinates $\{x_1,\ldots,x_n\}$ (understood as a set of values). Formally, $q$-quantile $\mathrm{quantile}(x, q)$
% outputs the value $v \in [\min_i(x), \max_i(x)]$, such that the probability of $\{x_i: x_i \leq v\}$ equals $q$. In particular, if the quantile lies between two values $x_i < x_j$ with indices $i$ and $j$ in the sorted order, the result is computed using linear interpolation as follows $x_i + \alpha\cdot(x_j - x_i)$, where $\alpha$ is the fractional part of the computed quantile index. Setting $q=0$ in $\mathrm{quantile}(x, q)$ will return the lowest value of $x$ while setting $q=1$ will return its highest value.

Following the above motivation, we fix the sparsity rate $r \in [0,1]$ to quantify the requested fraction of zeros in a probability mapping function. The $r$-softmax is defined by:
\begin{equation}\label{eq.r_softmax}
    \ourr{(x, r)} = \our(x, t_r).
\end{equation}
where
$$
t_r = -\mathrm{quantile}(x, r) + \max(x).
$$
The above parameterization of $t_r$ determines that the fraction of $r$ components will be zero. In particular, applying $\ourr(x,r)$ on $x=(x_1,\ldots,x_n) \in \R^n$ and $r=\frac{k}{n}$, for $k \leq n$, will output a probability distribution with $k$ zero coordinates. 

Using $\ourr$ function allows to reduce the model complexity and eliminate less probable components. Experiments demonstrate that this mechanism is beneficial for example in the self-attention mechanism applied in NLP tasks.

\subsubsection{Summary} In summary, we propose a new function that maps an input to a sparse probability distribution. Our function has two versions (1) the $\our$ version (see~\Cref{eq.t-softmax}), which produces an output with a sparsity level guided by the parameter $t$ that can be learned automatically during model training through backpropagation (no need to select it manually), and (2) the $\ourr$ version (see~\Cref{eq.r_softmax}), which introduces an intuitive parameter $r$ that allows the user to specify the desired fraction of zero elements in the output. The parameter $r$ may be learned through backpropagation~(as we demonstrate in the multi-label classification experiments in~\Cref{sec:expmulti}) as well as it can be manually chosen by the user~(as we show in the self-attention experiments in~\Cref{sec:expattn}). It is worth to note that while the use of the $r$ parameter in $\ourr$ offers interpretability and control over the model's behavior, it comes with an increased computational cost due to the need to calculate the $t$ parameter using the $quantile$ function, which requires sorting the input vector. Therefore, when computational complexity is a concern, the $\our$ version may be a more suitable option than the $\ourr$ version.

\section{Experiments}
\label{exps}

In this section, we benchmark $\ourr$ function against the basic softmax and other sparse probability mapping functions such as sparsemax and sparsehouglass.

First, we consider the multi-label classification problem and show that $\ourr$ is in most cases the best probability mapping function. Next, we fine-tune a pre-trained language model with different functions applied in self-attention blocks and show that $\ourr$ is the most beneficial choice\footnote{Code with $\ourr$ is available at \url{https://github.com/gmum/rsoftmax}}.

% Our code with $\ourr$ is available on the GitHub repository \url{https://github.com/anonymised}. For all experiments, we used Python with popular deep learning libraries such as PyTorch~\cite{pytorch}, Scikit-learn~\cite{scikit-learn} or NumPy~\cite{numpy}. We train our models on Nvidia Tesla V100 32GB GPU.

\subsection{Alternative to softmax in multi-label classification}
\label{sec:expmulti}

The multi-label classification problem is an important problem that arises in many domains. For example, the image classification problem, where describing an image by a single class is often not sufficient as it usually consists of objects belonging to different classes~\cite{pascalvoc2007,kumar2021multilabel,cocodataset}. The last element of the architecture, in multi-label classification models, is typically a function that maps the output of the network to a vector representing the probability of belonging to different classes~\cite{Wang_2016_CVPR}. In many cases, this function is softmax~\cite{Wang_2016_CVPR}, but many other functions are also investigated, such as those that introduce sparse probability distributions~\cite{laha2018controllable,sparsemax}.

\subsubsection{R-softmax for multi-label classification}
To use $\ourr$ in multi-label classification, we need to select a proper loss function. Unfortunately, we cannot directly apply cross-entropy loss as $\ourr$ can return zeros for certain positions, which makes the log function undefined. To resolve this issue, we follow the reasoning used in~\cite{laha2018controllable}. For this purpose, let $z$ denote the logits returned by a neural network for the input $x$ and let $\eta = y / \|y\|_1$ describe a probability distribution over the labels. Our loss function is defined as follows:
% \begin{align}\label{eq.loss_multilabel}
%  \mathcal{L}(z, y) = & \|y\cdot(\ourr(z, r) - \eta)\|^2_2 +  \\
%       & + \sum_{y_i=1,y_j=0} \max\left(0,\eta_i - (z_i-z_j)\right), \nonumber 
% \end{align}
\begin{align}\label{eq.loss_multilabel}
 \mathcal{L}(z, y) = & \|y\cdot(\ourr(z, r) - \eta)\|^2_2 + \sum_{y_i=1,y_j=0} \max\left(0,\eta_i - (z_i-z_j)\right),
\end{align}
where $y_i$ is $i$-th coordinate of the vector $y$ (similarly for $z$ and $\eta$). The first term focuses on approximating the probability on positive labels $\eta_i$ by $\ourr(z,r)_i$. The second term is responsible for pushing the logits of negative labels away from the positive ones by the margin $\eta_i$.

\begin{figure*}[t!]
\vskip 0.1in
\begin{center}
    \includegraphics[width=0.9\textwidth,height=.09\textwidth]{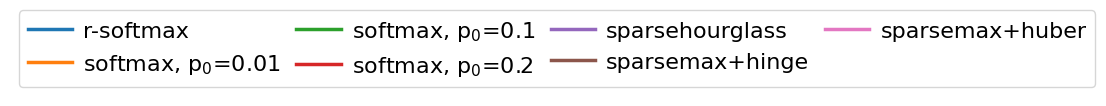}\hfill
    %\vskip 0.1in
    \begin{subfigure}[b]{\textwidth}
    \centering  
    \includegraphics[width=.32\textwidth,height=.3\textwidth]{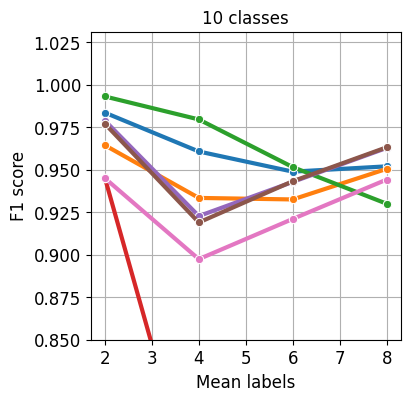}
    \includegraphics[width=.32\textwidth,height=.3\textwidth]{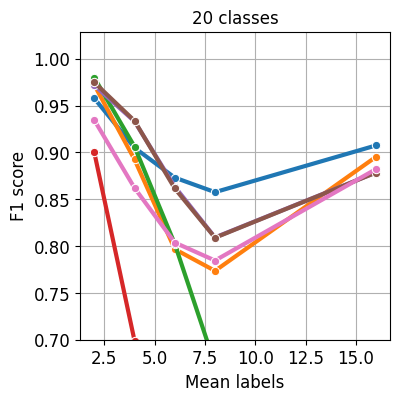}
    \includegraphics[width=.32\textwidth,height=.3\textwidth]{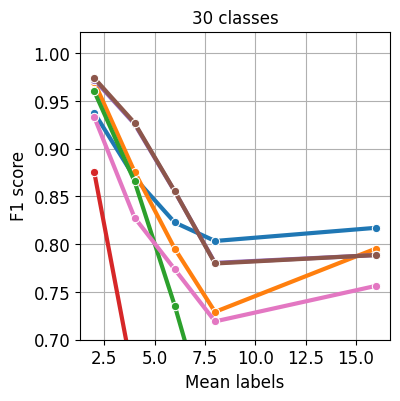}
    \caption{Varying average number of positive labels.}
    \label{fig:synthtask1}
    \end{subfigure}
    \vskip 0.1in
    \begin{subfigure}[b]{\textwidth}
    \centering  
    \includegraphics[width=.32\textwidth,height=.3\textwidth]{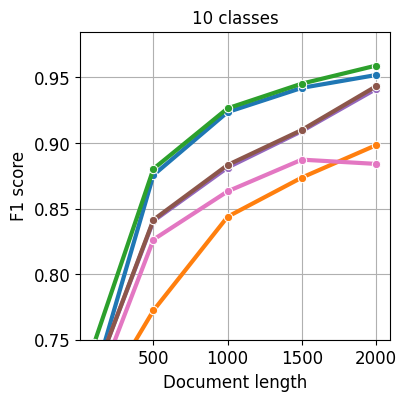}
    \includegraphics[width=.32\textwidth,height=.3\textwidth]{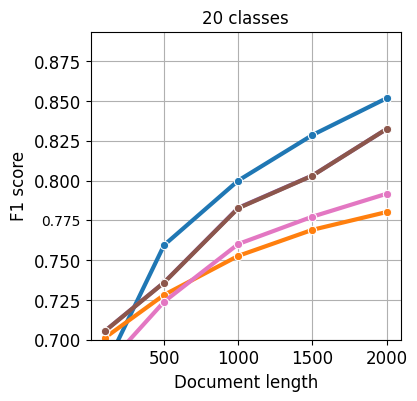}
    \includegraphics[width=.32\textwidth,height=.3\textwidth]{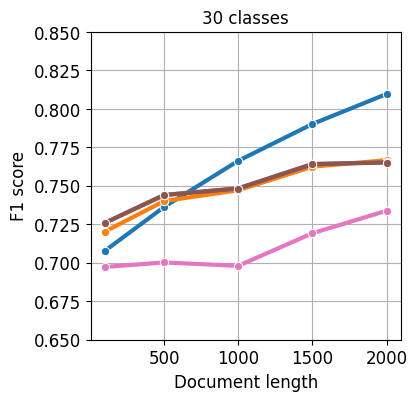}
    \caption{Varying document length. \label{fig:synthtask2}}
    \end{subfigure}
\end{center}
\caption{Different probability mapping functions for multi-label classification on the various synthetic datasets for different possible output class number (10, 20, 30). For datasets with fewer output classes (plots on the left) all functions produce similar results. However, for datasets with larger number of output classes (graphs in the middle and right) $\ourr$~seems to be the most beneficial choice.\label{fig:synthtask}}
\vskip -0.2in
\end{figure*}

\subsubsection{Datasets}
As preliminary experiments, we study a multi-label classification problem on synthetic data generated similarly to~\cite{laha2018controllable} using the scikit-learn library\footnote{\url{https://scikit-learn.org/stable/modules/generated/sklearn.datasets.make_multilabel_classification.html}}. We evaluate different probability mapping functions on varying average number of labels per sample (the document length is fixed at 2000) and on a different average document length which is the sum of the features per sample (in this case, the average number of labels is fixed at half the number of output classes). More specifically, these parameters are the expected values for Poisson distribution. Generated datasets consist of 5000 samples with 128 features, where 80\% of the data is the training set and 20\% is the validation set. We conducted experiments for 10, 20, and 30 possible output classes. 

Finally, we analyze the performance of considered functions on multi-label classification task on two popular real datasets: VOC 2007~\cite{pascalvoc2007} and COCO~\cite{cocodataset}. For these datasets, we resize the images to a height and width of 224, scale them to $[0, 1]$, and then normalize each channel. 

\subsubsection{Experimental setting}
As a baseline, we consider multi-label classification model with probability mapping function given by other sparse softmax alternatives such as sparsemax~\cite{sparsemax}, and sparsehourglass~\cite{laha2018controllable}. We assume that all non-zero values mean that the model predicted membership to the given class. For completeness, we also report the results of typical softmax~\cite{bridle1990probabilistic}. Theoretically, it is impossible to get zero values using softmax function (in practice, this can happen due to floating point precision), so we perform a search through various thresholds $p_0$ below which we consider the model to recognize class as negative.

For softmax function we use cross-entropy as a loss function, for sparsehourglass we use the cost function proposed by~\cite{laha2018controllable} and for sparsemax we test two functions, the one proposed originally by the authors~\cite{sparsemax}~(sparsemax$+$huber) and the one proposed by~\cite{laha2018controllable}~(sparsemax$+$hinge).

% As a baseline, we consider multi-label classification model with probability mapping function given by softmax~\cite{bridle1990probabilistic}. We also benchmark $\ourr$ against other sparse softmax alternatives such as sparsemax~\cite{sparsemax}, and sparsehourglass~\cite{laha2018controllable}. For softmax function we use cross-entropy as a loss function, for sparsehourglass we use the cost function proposed by~\cite{laha2018controllable} and for sparsemax we test two functions, the one proposed originally by the authors~\cite{sparsemax}~(sparsemax$+$huber) and the one proposed by~\cite{laha2018controllable}~(sparsemax$+$hinge).

% Theoretically, it is impossible to get zero values using softmax function (in practice, this can happen due to floating point precision), so in a model using softmax to predict correct labels, we perform a search through various thresholds $p_0$ below which we consider the model to recognize class as negative. The other examined functions can return zero values, so in these cases, we assume that all non-zero values mean that the model predicted membership to the given class. 

We use a simple two-layers neural network for synthetic datasets and pre-trained ResNet models~\cite{resnet} for real datasets (Resnet18 for VOC and Resnet101 for COCO) with an additional linear layer for classification followed by an evaluated activation function. We train the models with a learning rate~$\lambda=10^{-3}$ for synthetic datasets and with~$\lambda\in\{10^{-3}, 10^{-4}, 10^{-5}\}$ for VOC and COCO. For all scenarios, we use the Adam algorithm for gradient-based optimization~\cite{adam}.

Our $\ourr$ is parameterized by the sparsity rate~$r$, which corresponds to the desired fraction of zero labels in the multi-label experiment. To find its optimal value, we add an additional layer to the neural network which is responsible for predicting the sparsity rate that is later passed as an argument to $\ourr$ function. We supplied the multi-label classification cost function with the cross-entropy loss component responsible for evaluating the correctness of the number of labels indicated by the model.

In all settings, we report the best results on the validation set after the models achieve stability in the results on the validation set. For synthetic datasets, we train models for 150 epochs, and on VOC and COCO datasets we train models for 100 epochs. We use the F1 score as the quality metric for the multi-label classification models as it operates on the returned classes rather than on target scores (e.g., mean average precision metric).

\begin{table}[htb]
\caption{Effect of using different probability mapping functions for the multi-label classification problem for VOC and COCO validation datasets. Our function $\ourr$ (our) performs better than other tested sparse probability mapping functions (sparsemax and sparsehourglass) and it is also competitive to softmax itself, which requires the additional selection of a class indication threshold.\label{tab:multireal}}
\vskip 0.15in
\begin{center}
\begin{small}
% \begin{sc}
\begin{tabular}{lll}
\toprule
Experimental setup & \makecell[l]{VOC \\ (F1)} & \makecell[l]{COCO \\ (F1)}\\
\midrule
Softmax ($p_0$=0.05) & 75.05 & 71.38 \\
Softmax ($p_0$=0.10) & 78.87 & 72.29  \\
Softmax ($p_0$=0.15) & \textbf{79.43} & 69.22  \\
Softmax ($p_0$=0.20) & 79.07 & 64.88  \\
Softmax ($p_0$=0.30) & 75.88 & 54.76  \\
\midrule
Sparsemax$+$huber & 66.84 & 52.30 \\
Sparsemax$+$hinge & 71.91 & 65.67  \\
Sparsehourglass & 71.35 & 64.85  \\
$\ourr$ & 77.90 & \textbf{72.56} \\
\bottomrule
\end{tabular}
% \end{sc}
\end{small}
\end{center}
\vskip -0.1in
\end{table}

\begin{figure*}[htb]
\vskip 0.1in
\begin{center}
    \hskip 0.15in
    \includegraphics[width=.9\textwidth,height=.09\textwidth]{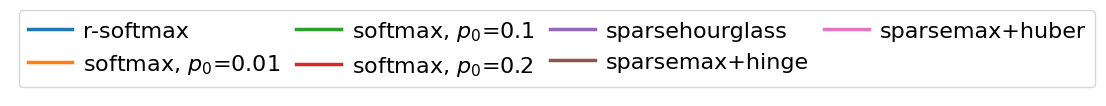} \hfill
    \includegraphics[width=.45\textwidth,height=.45\textwidth]{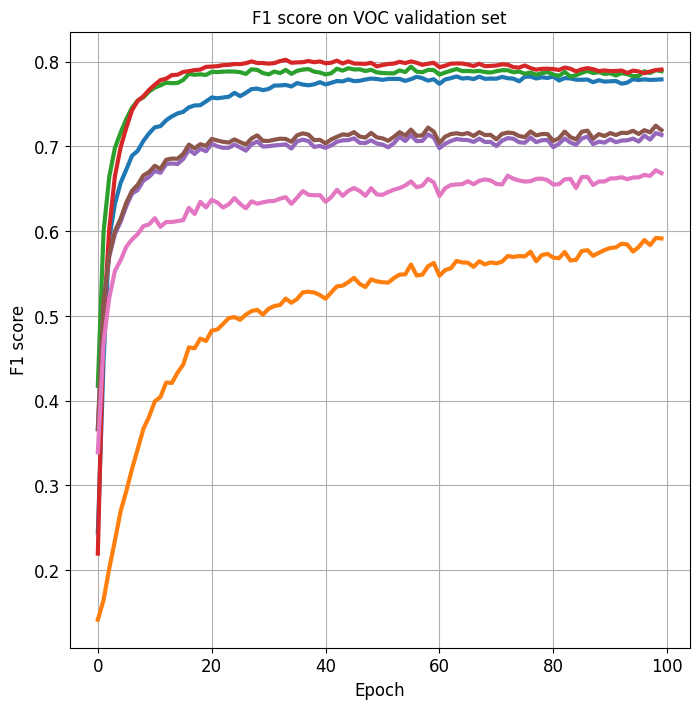} \hskip 0.2in
    \includegraphics[width=.45\textwidth,height=.45\textwidth]{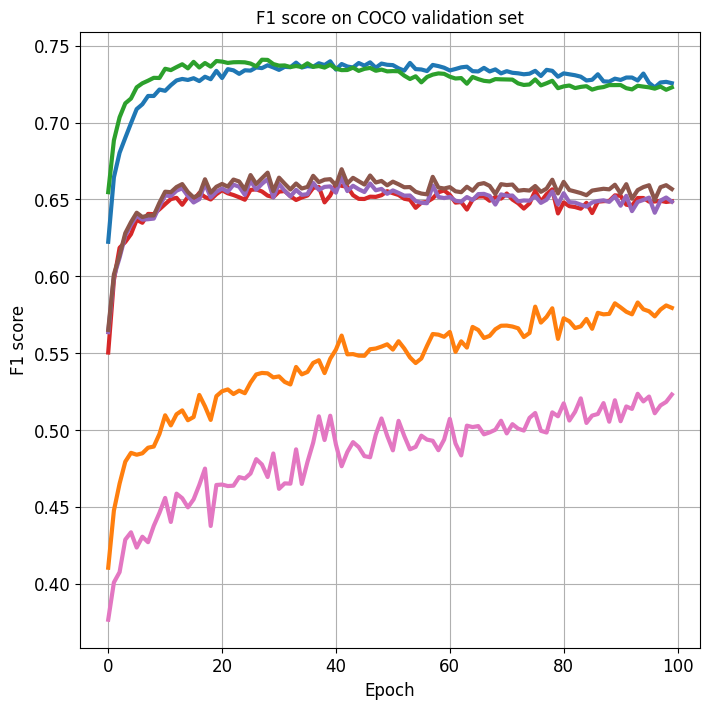}\hfill
\caption{Learning process (F1 score) when using different probability mapping functions for multi-label classification on VOC and COCO validation datasets. \label{fig:voccocolrc}}
\end{center}
\vskip -0.1in
\end{figure*}

\subsubsection{Results on synthetic datasets} 
\Cref{fig:synthtask} presents the performance of $\ourr$ function and its competitors (softmax, sparsemax, sparsehourglass) for multi-label classification experiments on the synthetic data validation set. For clarity of the graphs, we truncate the y-axis, omitting the notably lower results achieved by specific softmax versions with a particular $p_0$. 

In \Cref{fig:synthtask1} we compare the model behavior depending on the average number of positive labels. We can observe that all functions produce similar results for a small number of positive labels on average. However, for increasing the average number of positive labels, we may notice that our method produces the best results, especially when the dataset has a large number of possible output classes. 

In \Cref{fig:synthtask2} we show the impact of the average document length in data. In these experiments, we can also observe the superior or comparable performance of $\ourr$ for most configurations. Similarly like previously, we may observe that for a small number of classes in the output, our method is comparable to other functions. However, for a larger number of possible output classes, our method obtains the best results. Please note that the results for softmax with $p_0\in\{0.1, 0.2\}$ and output classes 20 and 30 are not included in the plots as they produce significantly worse results. This can be caused by the fact that for a larger number of output classes, probabilities are distributed over more components. This may lead to a situation where the output values are very small and it is more difficult to choose the appropriate threshold. Taking into consideration both of these experiments we conclude that in the investigated scenarios our method is the preferred choice as it generally provides the most benefits.

\subsubsection{Results on real datasets} We also evaluate $\ourr$ on real, multi-label classification datasets VOC and COCO, see~\Cref{tab:multireal}. Our $\ourr$ outperforms other sparse softmax alternatives and is very competitive with the original softmax. Although the performance of $\ourr$ is comparable to specific parametrization of softmax, the model with softmax requires the selection of appropriate threshold $p_0$ to indicate positive labels.  In practice, such selection has to be performed on the validation set, which generates additional computational costs. 

Additionally, in~\Cref{fig:voccocolrc} we report the F1 score learning curves for these experiments to observe how the model performance changes during learning depending on the considered probability mapping function. On the plots, we may observe  that model with $\ourr$ is learning much better than models with other sparse alternatives. Some softmax versions with a particular threshold converge faster than $\ourr$, but this most likely happens because the model has to learn the appropriate sparsity rate $r$, which requires a little more time. An advantage, however, is that there is no need to adjust any further thresholds afterward.

\subsection{Alternative to softmax in the self-attention block in transformer-based model}
\label{sec:expattn}

Nowadays, models based on the transformer architecture~\cite{vaswani2017attention} are the foundation for many state-of-the-art solutions in different fields, including natural language processing (NLP). A core element of the transformer is the attention mechanism, which is responsible for identifying important information for the neural network. In general, an attention block produces output based on input vectors: queries, keys, and values. The output is a sum of weighted values, where each weight is determined based on a query and corresponding key. For efficient computations, sets of queries, keys, and values are combined into matrices Q, K, and V. 

In more detail, each layer of the transformer contains a self-attention module, which is designed to indicate which tokens (parts of words in the text) in a sequence are contextually relevant to other tokens of the same sequence. Each of the self-attention blocks applies a softmax function that maps the resulting vector of the scaled dot product of queries $Q$ and keys $K$ of dimension $d_k$ into probabilities that represent weights for all values $V$, as shown below:

\begin{equation}
    Attention(Q, K, V ) = softmax(\frac{QK^T}{\sqrt{d_k}})V.
    \label{eq:selfattn}
\end{equation}
It is worth noting here that using softmax in this formula imposes an assignment of non-zero weight to each of the tokens in the sequence. In other words, every token, even insignificant one, has to be taken into account in further calculations.

In this section, we will demonstrate that replacing the softmax function with $\ourr$ that can return a sparse probability distribution is beneficial, as the model is able to ignore irrelevant tokens in the sequence.

\subsubsection{Experimental setting}
In our experiments we use a pre-trained transformer language model BERT~\cite{devlin-etal-2019-bert}, in which we focus on the probability mapping function in each of the self-attention blocks while fine-tuning the model. We report the performance of the baseline scenario with softmax as well as with its replacements: sparsemax, sparsehourglass, and $\ourr$. The implementation is based on the transformers library from Huggingface~\cite{wolf-etal-2020-transformers}.

We evaluate BERT model versions on several GLUE benchmark classification tasks~\cite{glue}, namely MRPC, RTE, SST-2, QNLI and QQP. We fine-tune the model for 5 epochs for the MRPC task and for 3 epochs for the other tasks. We report the final score on the validation datasets. We test different values of a learning rates for all models $\lambda \in \{10^{-5}, 2\cdot10^{-5}, 5\cdot10^{-5}, 10^{-4}, 5\cdot10^{-4}\}$.

Since we would like to check several possible final sparsity rates $r$ for $\ourr$, we linearly increase the hyperparameter $r$ during training from 0 (dense output) to the desired sparsity $r \in \{0.05, 0.1, 0.15, 0.2, 0.5\}$. During preliminary experiments, we observed that linear increase of the zeros fraction has its benefits, as the model has time to adapt to a given sparsity rather than losing information all at once.

%32 bs, 10 lg steps
% qqp 256, 100 lg steps but for sparsemax 128

% Please note here that when we refer to the sparsity rate of the self-attention output we are referring to the sparsity rate for the relevant tokens. We do not include irrelevant padding tokens in the sparsity rate calculation as the self-attention module in BERT implementation will output zeros on their positions anyway.

\subsubsection{Results}
\Cref{tab:glue} summarizes results for different GLUE downstream tasks obtained by the best run in the grid search described in the previous section. We may observe that in most cases, applying $\ourr$ instead of the softmax function improves the performance of the fine-tuned transformer-based model. Other sparse alternatives like sparsemax and sparsehourglass have demonstrated poor performance in this application.

\begin{table*}[tb!]
\caption{Using different probability mapping functions in self-attention blocks of pretrained BERT language model. We report results after finetuning a model on several GLUE benchmark tasks. Our $\ourr$, introducing a specific sparsity level, outperforms other proposals.\label{tab:glue}}
\vskip 0.15in
\begin{center}
\begin{small}
\begin{tabular}{llllll}
\toprule
Experiment setup & \makecell[l]{MRPC \\ (Acc)} & \makecell[l]{RTE \\ (Acc)} & \makecell[l]{SST-2 \\ (Acc)} & \makecell[l]{QNLI \\ (Acc)}  & \makecell[l]{QQP \\ (Acc)} \\
\midrule
Softmax & 84.56 & 68.95 & 92.32 & \textbf{91.76} & 91.12 \\
Sparsemax & 68.38 & 52.71 & 79.82 & 55.57 & 77.18\\
Sparsehourglass & 68.38 & 52.71 & 79.24 & 70.99 & 76.04\\
$\ourr$ & \textbf{85.54} & \textbf{71.84} & \textbf{92.89} & 91.73 & \textbf{91.13} \\
\bottomrule
\end{tabular}
\end{small}
\end{center}
\vskip -0.1in
\end{table*}

% MRPC (5e-5, r 0.0, 0.15)
% RTE (1e-4, r 0.0, 0.1)
% SST2 (2e-5, 5e-5, 5e-5, 5e-5  r 0.1
% QNLI 0.15

We examined $\ourr$ performance for various final sparsity rates. We linearly increased the sparsity rate from $r=0$ until it reached the desired value. The gradual incorporation of sparsity is intended to give the model time to adapt to the changes. We found that introducing only a small sparsity (small $r$) into the self-attention output produces the best results while enforcing too many zeros (large $r$) makes the results worse. The best performance for tasks QQP, MRPC, QNLI, RTE and SST-2 was achieved by $r={0.1, 0.15, 0.15, 0.2, 0.2}$ respectively. Results suggest that in general it is beneficial for the model to eliminate distracting elements that are irrelevant to the considered sample. However, excluding a larger number of elements (by zeroing their importance) is not advantageous because either the model loses too much context or because the gradient flow during learning becomes more challenging.

\section{Conclusions}
In this paper, we proposed $\ourr$, a generalization of softmax, producing sparse probability distribution with a controllable sparsity rate. We applied $\ourr$ as an output layer in the multi-label classification problem and as a scoring function in the self-attention module used in NLP tasks. The obtained results confirm that in most cases $\ourr$ is highly competitive or superior to baseline softmax and other sparse probability mapping functions. Furthermore, $\ourr$ offers a more intuitive representation of the data, that is adjustable by one simple parameter determining what fraction of the data should be zero.

%Furthermore, $\ourr$ offers a more interpretable and natural representation of the data, that is intuitively adjustable by one simple parameter determining what fraction of the data should be zero.

% \subsubsection{Acknowledgements} Please place your acknowledgments at
% the end of the paper, preceded by an unnumbered run-in heading (i.e.
% 3rd-level heading).

% \subsubsection{Acknowledgements} 
\subsubsection{Acknowledgements} 
The work of Klaudia Bałazy and Łukasz Struski was supported by the National Centre of Science (Poland) Grant No. 2020/39/D/ST6/
01332. The research of Jacek Tabor was carried out within the research project "Bio-inspired artificial neural network" (grant no. POIR.04.04.00-00-14DE/18-00) within the Team-Net program of the Foundation for Polish Science co-financed by the European Union under the European Regional Development Fund. The work of Marek Śmieja was supported by the National Centre of Science (Poland) Grant No. 2022/45/B/ST6/01117. Klaudia Bałazy is affiliated with Doctoral School of Exact and Natural Sciences at the Jagiellonian University. 

%
% ---- Bibliography ----
%
% BibTeX users should specify bibliography style 'splncs04'.
% References will then be sorted and formatted in the correct style.
%
\bibliographystyle{splncs04}
\bibliography{iccs23.bib}
%
% \begin{thebibliography}{8}
% \bibitem{ref_article1}
% Author, F.: Article title. Journal \textbf{2}(5), 99--110 (2016)

% \bibitem{ref_lncs1}
% Author, F., Author, S.: Title of a proceedings paper. In: Editor,
% F., Editor, S. (eds.) CONFERENCE 2016, LNCS, vol. 9999, pp. 1--13.
% Springer, Heidelberg (2016). \doi{10.10007/1234567890}

% \bibitem{ref_book1}
% Author, F., Author, S., Author, T.: Book title. 2nd edn. Publisher,
% Location (1999)

% \bibitem{ref_proc1}
% Author, A.-B.: Contribution title. In: 9th International Proceedings
% on Proceedings, pp. 1--2. Publisher, Location (2010)

% \bibitem{ref_url1}
% LNCS Homepage, \url{http://www.springer.com/lncs}. Last accessed 4
% Oct 2017
% \end{thebibliography}
\end{document}